\documentclass[11pt, a4paper, copyright, gdm]{google}

\usepackage[authoryear, sort&compress]{natbib}

\title{A Lyapunov Analysis of Softmax Policy Gradient for Stochastic Bandits}
\renewcommand{\today}{2026-03-10}

\author[1]{Tor Lattimore}

\affil[1]{\thepa{}{}}

\uselogo{} 
\correspondingauthor{lattimore@google.com}

\usepackage{tikz}
\usetikzlibrary{positioning}
\usetikzlibrary{arrows.meta, backgrounds}

\usepackage{bbm}
\usepackage[plain]{algorithm}
\usepackage{thmtools} 
\usepackage[capitalize]{cleveref}
\usepackage{mathtools}
\usepackage{listings}
\usepackage{mdframed}
\usepackage{nicefrac}
\usepackage{bm}
\usepackage{inconsolata}
\usepackage{setspace}

\setlist[enumerate]{label=\texttt{(\roman*)},itemsep=0pt,topsep=3pt}

\newcommand{\Dmin}{\Delta_{\min}}
\newcommand{\Dmax}{\Delta_{\max}}

\usepackage[textsize=tiny]{todonotes} 

\newcommand{\exl}[1]{\stackrel{\mathclap{\text{\tiny \texttt{\color{red!60!black}(#1)}}}}}
\newcommand{\ex}[1]{\texttt{\color{red!60!black}(#1)}}

\newcommand{\E}{\mathbb E}

\newcommand{\bbP}{\mathbb P}

\newcommand{\cI}{\mathcal I}

\newcommand{\cF}{\mathcal F}
\newcommand{\sF}{\mathcal F}

\newcommand{\R}{\mathbb R}
\newcommand{\ip}[1]{\langle #1 \rangle}

\newcommand{\sind}{\bm 1}

\newcommand{\zeros}{\bm 0}

\newcommand{\Reg}{\operatorname{Reg}}

\theoremstyle{plain}
\newtheorem{theorem}{Theorem}
\newtheorem{lemma}[theorem]{Lemma}

\theoremstyle{definition}
\newtheorem{definition}[theorem]{Definition}

\newtheorem{remark}[theorem]{Remark}

\crefname{example}{Example}{Examples}

\begin{document}
\begin{abstract}
We adapt the analysis of policy gradient for continuous time $k$-armed stochastic bandits by \cite{L26cpg} to the standard discrete time setup.
As in continuous time, we prove that with learning rate $\eta = O(\Dmin^2/(\Dmax \log(n)))$ the regret is $O(k \log(k) \log(n) / \eta)$ where $n$ is the horizon 
and $\Dmin$ and $\Dmax$ are the minimum and maximum gaps.
\end{abstract}

\maketitle

\section{Introduction}

There are $k$ actions and the horizon is $n$ with $n \geq k$ and mean reward vector $\mu \in [0,1]^k$. 
The learner interacts with the environment sequentially over $n$ rounds. In round $t$ an action $A_t \in \{1,\ldots,k\}$ is sampled from a policy $\pi_t$ 
and the reward $Y_t \in [0,1]$ is sampled from a distribution with mean $\mu_{A_t}$.
Given $\theta \in \R^k$, let $\pi(\theta)$ be the softmax policy with $\pi(\theta)_a \propto \exp(\theta_a)$.
The softmax policy gradient algorithm is given in \cref{alg:pg}.

\begin{algorithm}[h!]
\centering
\begin{tikzpicture}
\node[draw,fill=white!90!black,text width=12cm] (b) at (0,0) {
\begin{lstlisting}
args: learning rate $\eta > 0$
let $\theta_1 = \zeros$
for $t = 1$ to $n$:
  sample $A_t$ from $\pi_t = \pi(\theta_t)$ and observe $Y_t$
  update $\forall a\,\, \theta_{t+1,a} = \theta_{t,a} + \eta (\sind_{A_t = a} - \pi_{t,a}) Y_t$ 
\end{lstlisting}
};
\end{tikzpicture}
\caption{
Policy gradient.
}
\label{alg:pg}
\end{algorithm}

Our focus is on bounding the regret: $\Reg_n = \sum_{t=1}^n (\mu_1 - \mu_{A_t})$.
\cref{alg:pg} depends on the learning rate, with smaller values yielding a more stable algorithm but also slower progress.
Our main theorem upper bounds the regret of \cref{alg:pg} when the learning rate is sufficiently small relative to the minimum suboptimality gap.
Without loss of generality assume that $\mu_1 \geq \mu_2 \geq \cdots \geq \mu_k$ and that there exists at least one suboptimal action: $\mu_1 > \mu_k$.
Given an action $a$, let $\Delta_a = \mu_1 - \mu_a$ and $k^\star = \max\{a : \Delta_a = 0\}$ be the number of optimal actions and
$\Dmin = \Delta_{k^\star+1}$ and $\Dmax = \Delta_k$.

\begin{theorem}\label{thm:upper}
If $\eta \leq \frac{\Dmin^2}{120 \Dmax \log(nk)}$, then the regret of \cref{alg:pg} satisfies $\E[\Reg_n] = O\left(\frac{k \log(n) \log(k)}{\eta}\right)$.
\end{theorem}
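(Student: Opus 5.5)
We will follow the continuous-time Lyapunov argument of \cite{L26cpg} and adapt it to discrete time by controlling the second-order (Taylor) terms and the martingale noise. The first step is to compute the expected drift of the logits. Since $\E_t[(\sind_{A_t=a}-\pi_{t,a})Y_t] = \pi_{t,a}(\mu_a - \ip{\pi_t,\mu})$, the expected change in $\theta_{t,a}-\theta_{t,b}$ is $\eta(\pi_{t,a}(\mu_a-\bar\mu_t) - \pi_{t,b}(\mu_b-\bar\mu_t))$, where $\bar\mu_t = \ip{\pi_t,\mu}$. The increment itself is a martingale difference bounded by $\eta$ with conditional variance $O(\eta^2(\pi_{t,a}+\pi_{t,b}))$. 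The instantaneous regret is $\sum_a \pi_{t,a}\Delta_a$, so the goal is to bound $\E\sum_t \sum_{a>k^\star}\pi_{t,a}\Delta_a$.

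The Lyapunov function will be a potential of the form $\Phi_t = \sum_{a > k^\star} \exp(\theta_{t,a} - \theta_{t,1})$, or a variant using $\log$ of the total optimal mass, $\Psi_t = -\log \sum_{b\le k^\star}\pi_{t,b}$. In continuous time, the drift of $\log(1/\pi_{t,1})$-type quantities equals $-\eta\sum_a \pi_{t,a}\Delta_a$ times a factor of order $\pi_{t,1}$. More precisely, $\frac{d}{dt}\log\pi_1 = \eta\bigl(\pi_1(\mu_1-\bar\mu) - \sum_b \pi_b^2(\mu_b-\bar\mu)\bigr)$, and one shows this is at least of order $\eta\,\pi_1\cdot\text{regret rate}$. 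In discrete time we expand $\log\pi_{t+1,1}$ to second order. The second-order term is $O(\eta^2 \E_t[Y_t^2 \cdot \text{stuff}])$, bounded by $O(\eta^2(\bar\mu\text{-weighted variance}))\le O(\eta^2\sum_a\pi_a\Delta_a\cdot\Dmax/\Dmin)$-type quantities. The condition $\eta\le \Dmin^2/(120\Dmax\log(nk))$ is exactly what makes this curvature term at most half the first-order drift, so the potential decreases in expectation by $c\eta \pi_{t,1}\cdot r_t$ with $r_t = \sum_a\pi_{t,a}\Delta_a$.

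We then split time into phases according to the optimal mass $p_t = \sum_{b\le k^\star}\pi_{t,b}$, with geometric phases $p_t\in[2^{-j-1},2^{-j})$ and a final phase where $p_t\ge 1/2$. Within a phase with $p_t\approx 2^{-j}$, summing the drift inequality shows that the cumulative regret incurred is at most (change in potential)$/(\eta 2^{-j})$. The potential change across the phase is $O(1)$ in log-scale, while the initial scale gives $\log k$. Summing over the $\log k$ phases gives $O(k\log k/\eta)$; the factor $k$ arises because the drift of $p$ can be as small as $\pi_1$ times regret. In the final phase, $p_t\ge1/2$ and $\log(1/p_t)$ acts as a self-bounding potential. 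Here we must show that the suboptimal mass decays like $1/(\eta t)$, giving regret $O(\log(n)/\eta)$ per arm scale, which produces the $\log n$ factor.

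The main obstacle is controlling the martingale fluctuations uniformly over $n$ rounds. The log factor in the learning rate is there so that a Freedman-type concentration bound, with a union bound over $nk$ events, ensures that the optimal arm's logit never falls significantly below its drift-predicted trajectory, so $p_t$ never collapses. We would first try to show, via Freedman's inequality applied to $\theta_{t,1}-\theta_{t,a}$ with variance proxy $\eta^2\sum_s(\pi_{s,1}+\pi_{s,a})$, that with probability $1-1/n$ we have $\theta_{t,1}-\theta_{t,a}\ge -O(1)$ for all $t$ and $a$. The required comparison is variance $\lesssim \eta\cdot\text{drift}/\Dmin$ against drift times $\Dmin$, which yields the $\Dmin^2/(\Dmax\log(nk))$ threshold. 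On the failure event, the regret is trivially at most $n\cdot(1/n)$, which contributes $O(1)$.
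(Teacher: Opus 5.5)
\textbf{The gap is in the Lyapunov function.} You assert that the expected change of $\log(1/\pi^\star_t)$ is at most $-c\,\eta\,\pi^\star_t R_t$. The same applies to $\Phi_t$, which for $k^\star=1$ equals $1/\pi_{t,1}-1$. This is false, even on the good event where every optimal logit is at least every suboptimal logit minus $1$.

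The first-order conditional drift of $\log\pi_{t,1}$ is $\eta\bigl[\pi_{t,1}R_t-\sum_b\pi_{t,b}^2(\mu_b-\ip{\pi_t,\mu})\bigr]$, and by Jensen's inequality the exact expected change is no larger. Now take the following configuration:
\begin{itemize}
\item $k^\star=1$, $\mu_1=1$ and $\pi_{t,1}=p$;
\item $j$ arms of mean $1-\Delta$ with probability $ep$ each, so their logits exceed $\theta_{t,1}$ by exactly $1$ (which $G_t$ allows) and they carry total mass $M=jep$;
\item the remaining mass $L=1-p-M$ spread thinly over many arms of mean $0$.
\end{itemize}
The drift is then approximately $\eta p\bigl[R_t(1-p-eM)+eM\Delta\bigr]$. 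As $\Delta\to 0$ with $M=L$, this tends to $\eta pL\bigl(L-(e-1)M\bigr)<0$.

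So $\log(1/\pi^\star_t)$, and hence also $\Phi_t$, can increase in expectation. This happens precisely in the regime $p_t<1/2$ where your phase argument relies on them. The per-phase accounting therefore has nothing to rest on; it also tacitly assumes each phase is crossed once, although $p_t$ is not monotone. A sanity check points the same way: $\pi^\star_t\ge 1/(ek)$ on the good event, so if your drift bound held you would get $\E[\Reg_n]=O(k\log(k)/\eta)$ with no $\log n$. That is stronger than the theorem being proved.

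\textbf{The missing idea is to work with the sum of optimal logits $\theta^\star_t$.} Its conditional drift is exactly $\eta\pi^\star_t R_t\ge 0$ in every state. Because $\theta^\star_t$ is not a function of $\pi_t$, you then need a deterministic inequality on the good event that bounds $1/\pi^\star_t$ by a decreasing function of $\theta^\star_t$. The paper proves
$1/\pi^\star_t\le 9k\log(n/\delta)/(\theta^\star_t+k^\star+k^\star\log(n/\delta))$ together with $\theta^\star_t\in[-k^\star,k\log(n/\delta)]$.

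This rests on two ingredients absent from your sketch:
\begin{itemize}
\item the conservation law $\sum_a\theta_{t,a}=0$;
\item a high-probability lower bound $\theta_{t,a}\ge-\log(n/\delta)$ on all logits, obtained from the supermartingale $\exp(-\theta_{t,a})+(n-t)(\eta+\eta^2)$.
\end{itemize}
The potential $\psi(\theta^\star_t)$, with $\psi'$ equal to the bound above, has expected increment at least $\eta R_t/2$ and range $O(k\log(n/\delta)\log k)$. This gives the rate directly, with no phases and no separate $1/(\eta t)$ endgame.

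\textbf{Where the learning-rate condition enters.} The curvature term in this step needs only $\eta\le\Dmin/4$. Writing $D_t=\theta^\star_{t+1}-\theta^\star_t$, it follows from $\E_{t-1}[D_t^2]\le\eta^2\pi^\star_t(1-\pi^\star_t)$ and $R_t\ge(1-\pi^\star_t)\Dmin$. The $\Dmin^2/(\Dmax\log(nk))$ condition is consumed entirely by the comparison step.

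Your Freedman argument there must be run on excursions. The drift of $Z_t=\theta_{t,b}-\theta_{t,a}$ is $\eta\pi_{t,a}[(e^{Z_t}-1)R_t+\Delta_a]$. Using $R_t\le\Dmax$, this is guaranteed to be at least $\eta\pi_{t,a}\Delta_a/2$ only while $Z_t\ge-\Dmin/(2\Dmax)$. So you start from $Z_s\in[0,1]$, stop on leaving $[-\Dmin/(2\Dmax),1]$, and use $|Z_{t+1}-Z_t|\le 2\eta$ to chain the excursions together.
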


There is not much novelty in this note relative to the recent study of the same algorithm in continuous time \citep{L26cpg}. 
The adaptation of those ideas to discrete time is mostly standard and the main arguments are identical.
Some small changes were needed to relax the assumption on uniqueness of the optimal action that was made previously.
The requirement that the learning rate be $\tilde O(\Dmin^2 / \Dmax)$ is disappointing. The logarithmic factor \textit{may} be removable, but the quadratic dependence on $\Dmin$ cannot be improved
to linear unless policy gradient is peculiarly more well-behaved in discrete time than in
continuous time where \cite{L26cpg} established a lower bound.

\paragraph{Additional notation}
We let $\cF_t = \sigma(A_1,Y_1,\ldots,A_t,Y_t)$ be the $\sigma$-algebra generated by the history generated in the first $t$ rounds and $\bbP_t = \bbP(\cdot | \sF_t)$ and $\E_t$ be the
associated expectation operator.
Let $[k] = \{1,\ldots,k\}$ and $[k^\star] = \{1,\ldots,k^\star\}$ and $[k^\star]^c = \{k^\star+1,\ldots,k\}$.
The mass assigned to the optimal actions is $\pi_t^\star = \sum_{a \in [k^\star]} \pi_{t,a}$ and the sum of the corresponding logits is $\theta_t^\star = \sum_{a \in [k^\star]} \theta_{t,a}$.
The expected instantaneous regret in round $t$ is $R_t = \ip{\pi_t, \Delta}$.

\section{Proof of Theorem~\ref{thm:upper}}

The argument follows a classical form. We prove that with high probability the sample paths of \cref{alg:pg} are well-behaved in the sense that \texttt{(1)} the probabilities 
associated with optimal actions are never much smaller than those associated with suboptimal actions; and \texttt{(2)} the logits associated with all actions are bounded from below
with high probability.
We then use a Lyapunov argument to show that the regret is small on these well-behaved sample paths.

\begin{definition}
$G_t = \left\{\min_{c \in [k]} \theta_{t,c} \geq -\log(n/\delta) \text{ and } \forall a \in [k^\star]^c, \forall b \in [k^\star], \theta_{t,b} \geq \theta_{t,a} - 1\right\}$
with $\delta \in (0,1)$ to be chosen in the proof of \cref{thm:upper}.
\end{definition}

The first lemma shows that the logits satisfy a conservation law and are bounded from below with high probability.
The proofs of all lemmas are given in \cref{sec:lem:tech,sec:lem:Z,sec:lem:good}.

\begin{lemma}\label{lem:tech}
Suppose that $\eta \leq 1/2$.
The following hold:
\begin{enumerate}
\item $\sum_{a \in [k]} \theta_{t,a} = 0$ almost surely. \label{lem:tech:conserve}
\item $\bbP\left(\min_{1 \leq t \leq n} \theta_{t,a} \leq -\log\left(n/\delta\right)\right) \leq \delta$ for all $a \in [k]$. \label{lem:tech:bounds}
\end{enumerate}
\end{lemma}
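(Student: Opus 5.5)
Part \ref{lem:tech:conserve} follows from a one-line induction on $t$. Part \ref{lem:tech:bounds} follows from a multiplicative (exponential) drift bound on $\exp(-\theta_{t,a})$, turned into a nonnegative supermartingale and combined with Ville's maximal inequality.

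\emph{Step 1: conservation.} Summing the update in \cref{alg:pg} over $a$ gives
\[
\sum_a \theta_{t+1,a} - \sum_a \theta_{t,a} = \eta Y_t \Bigl(\sum_a \sind_{A_t=a} - \sum_a \pi_{t,a}\Bigr) = \eta Y_t(1-1) = 0 .
\]
Since $\theta_1 = \zeros$, induction on $t$ proves \ref{lem:tech:conserve}. A useful consequence is that $\max_b \theta_{t,b} \geq 0$. Hence the softmax normaliser satisfies $\sum_b \exp(\theta_{t,b}) \geq 1$, and therefore $\pi_{t,a} \leq \exp(\theta_{t,a})$ for every $a$.

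\emph{Step 2: one-step drift of $\exp(-\theta_{t,a})$.} Fix $a$ and write $x = -\eta(\sind_{A_t=a}-\pi_{t,a})Y_t$, so that $\exp(-\theta_{t+1,a}) = \exp(-\theta_{t,a})e^{x}$.
\begin{itemize}
\item Since $|x| \leq \eta \leq 1/2 \leq 1$, we may use $e^x \leq 1 + x + x^2$.
\item For the linear term, $\E_{t-1}[(\sind_{A_t=a}-\pi_{t,a})Y_t] = \pi_{t,a}(\mu_a - \ip{\pi_t,\mu}) \geq -\pi_{t,a}$. Hence $\E_{t-1}[x] \leq \eta \pi_{t,a}$.
\item For the quadratic term, $\E_{t-1}[x^2] \leq \eta^2 \E_{t-1}[(\sind_{A_t=a}-\pi_{t,a})^2] = \eta^2 \pi_{t,a}(1-\pi_{t,a}) \leq \eta^2\pi_{t,a}$.
\end{itemize}
Combining these with $\pi_{t,a} \leq \exp(\theta_{t,a})$ from Step 1,
\[
\E_{t-1}[\exp(-\theta_{t+1,a})] \leq \exp(-\theta_{t,a}) + (\eta+\eta^2)\pi_{t,a}\exp(-\theta_{t,a}) \leq \exp(-\theta_{t,a}) + \tfrac34 .
\]
So the drift is additive and bounded, rather than multiplicative. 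This is the key step, and the main obstacle it resolves is that the drift of $\theta_{t,a}$ itself can point downward when $\mu_a < \ip{\pi_t,\mu}$. The point is that the bad drift is proportional to $\pi_{t,a}$, and $\pi_{t,a}$ exactly cancels the exponential growth of $\exp(-\theta_{t,a})$ thanks to the conservation law.

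\emph{Step 3: maximal inequality.} Define $X_t = \exp(-\theta_{t,a}) + \tfrac34(n-t)$ for $1 \leq t \leq n$.
\begin{itemize}
\item By Step 2, $(X_t)$ is a nonnegative supermartingale.
\item Its initial value is $X_1 = 1 + \tfrac34(n-1) \leq n$, using $n \geq 1$.
\item By Ville's (Doob's) maximal inequality for nonnegative supermartingales, $\bbP(\max_{t \leq n} X_t \geq n/\delta) \leq \delta X_1/n \leq \delta$.
\end{itemize}
Finally, $\theta_{t,a} \leq -\log(n/\delta)$ implies $X_t \geq \exp(-\theta_{t,a}) \geq n/\delta$. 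This yields \ref{lem:tech:bounds}. The only care needed is bookkeeping of the filtration indexing, namely that $\pi_t$ is $\cF_{t-1}$-measurable.
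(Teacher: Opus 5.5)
Your proof is correct and follows essentially the same route as the paper. Both arguments get conservation by summing the update, then use $\pi_{t,a} \leq \exp(\theta_{t,a})$ to obtain the additive drift bound $\E_{t-1}[\exp(-\theta_{t+1,a})] \leq \exp(-\theta_{t,a}) + \eta + \eta^2$, and finish with the maximal inequality applied to the nonnegative supermartingale $\exp(-\theta_{t,a}) + (n-t)(\eta+\eta^2)$; the only difference is that you replace $\eta+\eta^2$ by the constant $3/4$, which is harmless.
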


The next lemma shows that with high probability the logits associated with optimal actions are never much smaller than those associated with suboptimal actions.

\begin{lemma}\label{lem:Z}
Suppose that $b \in [k^\star]$ and $a \in [k^\star]^c$ and
$Z_t = \theta_{t,b} - \theta_{t,a}$. Provided that $\eta \leq \frac{\Dmin^2}{40 \Dmax \log(n^2/\delta)}$,
\begin{align*}
\bbP\left(\min_{1 \leq t \leq n} Z_t \leq -1\right) \leq \delta \,.
\end{align*}
\end{lemma}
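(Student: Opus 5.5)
Here $Z_t$ will be analysed as a process with positive drift and bounded increments, and the bound will come from an exponential supermartingale $\exp(-\lambda Z_t)$ with $\lambda$ tuned to the ratio of drift to conditional variance. First compute the increment. Write $D_t = Z_{t+1} - Z_t = \eta Y_t(\sind_{A_t=b} - \sind_{A_t=a} - \pi_{t,b} + \pi_{t,a})$. Since $\E_t[Y_t \mid A_t=c] = \mu_c$,
\begin{align*}
\E_{t-1}[D_t] = \eta\big(\pi_{t,b}\mu_b - \pi_{t,a}\mu_a - (\pi_{t,b}-\pi_{t,a})\ip{\pi_t,\mu}\big) = \eta\big(\pi_{t,b}\Delta^\pi_b - \pi_{t,a}\Delta^\pi_a\big)\,,
\end{align*}
with $\Delta^\pi_c = \mu_c - \ip{\pi_t,\mu}$. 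Because $b$ is optimal, $\Delta^\pi_b = R_t \ge 0$, and $\Delta^\pi_a = R_t - \Delta_a$. Therefore $\E_{t-1}[D_t] = \eta(\pi_{t,b}R_t + \pi_{t,a}(\Delta_a - R_t))$, which is at least $\eta\,\pi_{t,a}(\Delta_a - R_t)$. Also $|D_t| \le 2\eta$, and
\begin{align*}
\E_{t-1}[D_t^2] \lesssim \eta^2(\pi_{t,a}+\pi_{t,b}+\textstyle\sum_c \pi_{t,c}(\pi_{t,a}-\pi_{t,b})^2) \lesssim \eta^2(\pi_{t,a}+\pi_{t,b})\,.
\end{align*}

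The difficulty is that the drift can be negative when $R_t > \Delta_a$. This happens when mass sits on actions worse than $a$ while $\pi_{t,b}$ is small. The plan is therefore to consider only the region where it matters, namely $Z_t \in [-1, 0]$, that is $\pi_{t,a} \ge \pi_{t,b} \ge e^{-1}\pi_{t,a}$. There I will lower bound the drift by $\eta\pi_{t,b}R_t + \eta\pi_{t,a}(\Delta_a - R_t)$ and handle the case $R_t>\Delta_a$ using the $\pi_{t,b}R_t$ term. This does not give positivity directly. The fallback is to bound the negative part by $\eta\pi_{t,a}(R_t-\Delta_a)\le \eta\pi_{t,a}\Dmax$ and to compare it with the $b$ term, using that in the bad case the many optimal actions and the "worse" mass interact. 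Honestly I expect the clean statement to be
\begin{align*}
\E_{t-1}[D_t] \ge c\,\eta(\pi_{t,a}+\pi_{t,b})\Dmin\,,
\end{align*}
or something close to it, possibly up to terms that can be charged to the variance with weight $\Dmax$. Establishing this inequality inside the window $[-1,0]$ is the main obstacle. The $\Dmin^2/\Dmax$ form of the required learning rate suggests that the true ratio is drift $\gtrsim \eta\pi\Dmin$ against variance $\lesssim \eta^2\pi$, with the $\Dmax$ entering through the increments $|D_t|\le 2\eta$ multiplied by reward scale.

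Given a drift-to-variance ratio of order $\Dmin/\eta$, take $\lambda \approx \Dmin/(c\,\eta\,\Dmax)$ so that $\lambda|D_t| \le 1$. Then $e^{x} \le 1 + x + x^2$ for $x \le 1$ shows that $M_t = \exp(-\lambda Z_t)$ is a supermartingale whenever $Z_{t} \in [-1,0]$. Outside this window no control is needed.

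For the crossing argument, every excursion from $Z \ge 0$ down to $Z \le -1$ must start by entering the window from above at a level $Z \ge -2\eta$. Apply optional stopping to $M$ from each entry time until exit from the window. The probability that a single excursion reaches $-1$ is then at most $e^{-\lambda(1-2\eta)} \le \delta/n$, using the hypothesis $\eta \le \Dmin^2/(40\Dmax\log(n^2/\delta))$ so that $\lambda \gtrsim \log(n^2/\delta)$. A union bound over at most $n$ entry times, together with $Z_1=0$, gives probability at most $\delta$.
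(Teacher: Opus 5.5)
There is a genuine gap, and it sits exactly at the step you flag as the main obstacle. The drift bound you hope for on the window $Z_t\in[-1,0]$ is false, so no exponential supermartingale exists there. Your own formula gives $\E_{t-1}[D_t]=\eta\pi_{t,a}((e^{Z_t}-1)R_t+\Delta_a)$, which is negative whenever $(1-e^{Z_t})R_t>\Delta_a$. For example, take $k=10$, $\mu=(1,1-\epsilon,0,\ldots,0)$, $b=1$, $a=2$ and $\theta_t=(-1/4,1/4,0,\ldots,0)$, so that $Z_t=-1/2$ and the logits sum to zero. Then $R_t\approx 0.8$ and the drift is about $\eta\pi_{t,a}(\epsilon-0.31)<0$. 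Near-uniform states of this kind are what the algorithm visits early on. By Jensen, $\E_{t-1}[e^{-\lambda D_t}]\geq e^{-\lambda\E_{t-1}[D_t]}>1$ in such a state. So $\exp(-\lambda Z_t)$ is not a supermartingale on $[-1,0]$ for any $\lambda>0$, and charging the negative part to the variance cannot fix this.

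The missing idea is that positive drift is only guaranteed in a much narrower window. Since $e^{Z}-1\geq Z$ and $R_t\leq\Dmax$, for $Z_t\geq-\Dmin/(2\Dmax)$ you get $(e^{Z_t}-1)R_t\geq-\Dmin/2$, and hence $\E_{t-1}[D_t]\geq\eta\pi_{t,a}\Delta_a/2$. You should therefore show that $Z_t$ never drops below $-\Dmin/(2\Dmax)\geq-1/2$, rather than aiming directly at $-1$.

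This, and not a reward scale multiplying the increments, is where $\Dmax$ and the square on $\Dmin$ come from. Downward fluctuations of size about $\eta\log(n^2/\delta)/\Dmin$ must fit inside a window of width $\Dmin/(2\Dmax)$. The paper implements this with Freedman's inequality on $[-\Dmin/(2\Dmax),1]$.

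Your supermartingale route also works once the window is corrected:
\begin{enumerate}
\item On $[-\Dmin/(2\Dmax),0]$ one has $\E_{t-1}[D_t^2]\leq 2\eta^2\pi_{t,a}$.
\item With $\lambda=\Dmin/(8\eta)$, the process $\exp(-\lambda Z_t)$ stopped on exit from the window is a supermartingale. Your choice $\lambda\approx\Dmin/(c\eta\Dmax)$ exceeds the drift-to-variance ratio when $\Dmax$ is small, so it would not work even there.
\item Each excursion enters the window at level at least $-2\eta$. It reaches $-\Dmin/(2\Dmax)$ with probability at most $\exp(-\lambda(\Dmin/(2\Dmax)-2\eta))\leq\delta/n^2$ under the stated bound on $\eta$.
\item Your union bound over at most $n$ entry times then finishes the proof.
\end{enumerate}
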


The final lemma bounds $1/\pi_t^\star$ in terms of $\theta_t^\star$ on the event $G_t$.

\begin{lemma}\label{lem:good}
On $G_t$ it holds that $\theta_t^\star \in [-k^\star, k \log(n/\delta)]$ and $\displaystyle \frac{1}{\pi_t^\star} \leq \frac{9k\log(n/\delta)}{\theta_t^\star + k^\star + k^\star \log(n/\delta)}$.
\end{lemma}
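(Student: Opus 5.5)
The plan is to prove both claims directly from the two defining conditions of $G_t$ together with the conservation law in \cref{lem:tech}\ref{lem:tech:conserve}. No probabilistic input is needed. Throughout, write $L = \log(n/\delta)$. I will assume $L \geq 1$ (for instance $n \geq 3$, or $\delta$ chosen small enough); this is needed only in the final comparison.

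\emph{Range of $\theta_t^\star$.} For the upper bound, combine $\sum_{a} \theta_{t,a} = 0$ with the bound $\theta_{t,a} \geq -L$ on suboptimal actions:
\[
\theta_t^\star = -\sum_{a \in [k^\star]^c} \theta_{t,a} \leq (k-k^\star)L \leq kL .
\]
For the lower bound, argue by contradiction. Suppose $\theta_t^\star < -k^\star$. Then $\sum_{a \in [k^\star]^c} \theta_{t,a} = -\theta_t^\star > k^\star > 0$, so some suboptimal $a$ has $\theta_{t,a} > 0$; here we use that $[k^\star]^c$ is nonempty. The second condition of $G_t$ then gives $\theta_{t,b} \geq \theta_{t,a} - 1 > -1$ for every $b \in [k^\star]$. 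Summing over $b$ gives $\theta_t^\star > -k^\star$, a contradiction.

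\emph{Bound on $1/\pi_t^\star$.} I first show $1/\pi_t^\star \leq ek$, a bound that does not depend on $\theta_t^\star$. Let $m = \min_{b \in [k^\star]} \theta_{t,b}$. The second condition of $G_t$ gives $\exp(\theta_{t,a}) \leq e\exp(m)$ for each suboptimal $a$. Since the minimum is at most the average,
\[
\exp(m) \leq \frac{1}{k^\star}\sum_{b\in[k^\star]} \exp(\theta_{t,b}) .
\]
Therefore
\[
\frac{1}{\pi_t^\star} = 1 + \frac{\sum_{a\in[k^\star]^c} \exp(\theta_{t,a})}{\sum_{b \in [k^\star]}\exp(\theta_{t,b})} \leq 1 + \frac{e(k-k^\star)}{k^\star} \leq 1 + e(k-1) \leq ek \leq 3k .
\]

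It remains to check that $3k$ is at most the claimed right-hand side. Using the first part, the denominator satisfies
\[
0 < \theta_t^\star + k^\star + k^\star L \leq kL + k + kL \leq 3kL ,
\]
where positivity follows from $\theta_t^\star \geq -k^\star$ and the last step uses $L \geq 1$. Hence $9kL/(\theta_t^\star + k^\star + k^\star L) \geq 3 \geq$ ... more precisely, $9kL/(3kL) = 3$, so the right-hand side is at least $3$. That is not enough on its own: I need the right-hand side to be at least $3k$, not $3$.

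The main obstacle is exactly this mismatch. The crude bound $ek$ is only useful when the denominator is $O(L)$, not $O(kL)$. So in the regime where $\theta_t^\star$ is large, I need a sharper bound on $1/\pi_t^\star$ that decays as $\theta_t^\star$ grows.

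For that regime I would use Jensen's inequality on the optimal block, $\sum_{b\in[k^\star]} \exp(\theta_{t,b}) \geq k^\star \exp(\theta_t^\star/k^\star)$. I would pair it with an upper bound on the suboptimal mass: with $s = \sum_{a \in [k^\star]^c} \theta_{t,a} = -\theta_t^\star$ and each $\theta_{t,a} \geq -L$, each suboptimal logit is at most $s + (k-k^\star-1)L$, which bounds the suboptimal mass. This yields roughly
\[
\frac{1}{\pi_t^\star} \lesssim 1 + \frac{k \exp(-\theta_t^\star + kL)}{k^\star \exp(\theta_t^\star/k^\star)} .
\]

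I then plan to interpolate between this bound and the constant bound $ek$. The target is $1/\pi_t^\star \leq 1 + C(k-k^\star)/(\text{normalised distance of } \theta_t^\star \text{ above } -k^\star)$, which is the shape of the claimed inequality. If this interpolation proves too lossy, the fallback is to track the convex function $\theta_t^\star \mapsto 1/\pi_t^\star$ under the constraints of $G_t$ and check the inequality at the two endpoints $\theta_t^\star = -k^\star$ and $\theta_t^\star = kL$. The constant $9$ should leave enough slack for this.
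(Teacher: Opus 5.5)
Your argument for $\theta_t^\star \in [-k^\star, k\log(n/\delta)]$ is correct, and the contradiction argument is a fine substitute for the paper's averaging. The crude bound $1/\pi_t^\star \leq 1 + e(k-k^\star)/k^\star$ is also correct. The gap is in the second claim, which is the real content of the lemma, and neither of your tools reaches it.

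The crude bound matches the target only while $\theta_t^\star = O(k^\star L)$. Your per-coordinate bound $\theta_{t,a} \leq -\theta_t^\star + (k-k^\star-1)L$ only becomes useful once $\theta_t^\star \gtrsim \frac{k^\star}{k^\star+1}(k-k^\star-1)L$. In between, for example $k^\star = 1$ and $\theta_t^\star = kL/4$ with $k$ large, the target right-hand side is about $36$. Your two bounds there are about $ek$ and $\exp(\Omega(kL))$, so taking their minimum, or interpolating between them as functions of $\theta_t^\star$, cannot beat $\Theta(k)$.

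The fallback fails for a structural reason. Even if the worst-case $1/\pi_t^\star$ were convex in $\theta_t^\star$, convexity only bounds it by its chord between endpoint values of order $k/k^\star$ and $O(1)$, and that chord is still of order $k/k^\star$ in the middle. The target $9kL/(\theta_t^\star + k^\star + k^\star L)$ is itself convex, so it lies below that chord, and checking the endpoints certifies nothing in the interior.

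The missing idea is that the zero-sum constraint limits \emph{how many} suboptimal logits can sit near the ceiling at once. This has to be exploited linearly, before exponentiating. You already have the ingredients. Set $\theta_{\max} = \theta_t^\star/k^\star + 1$; then every suboptimal logit satisfies $-L \leq \theta_{t,a} \leq \min_{b \in [k^\star]} \theta_{t,b} + 1 \leq \theta_{\max}$. The paper bounds $\exp(\theta_{t,a})$ by the chord of $\exp$ over $[-L, \theta_{\max}]$, which is affine in $\theta_{t,a}$. Summing and using $\sum_{a \in [k^\star]^c} \theta_{t,a} = -\theta_t^\star \leq k^\star$ gives
\[
\sum_{a \in [k^\star]^c} \exp(\theta_{t,a}) \leq \frac{\exp(\theta_{\max})\bigl(k^\star + (k-k^\star)L\bigr)}{\theta_{\max} + L} + (k-k^\star)\frac{\delta}{n}\,.
\]
Divide this by your Jensen bound $\sum_{b \in [k^\star]} \exp(\theta_{t,b}) \geq k^\star \exp(\theta_{\max} - 1)$, and use $k^\star(\theta_{\max} + L) = \theta_t^\star + k^\star + k^\star L$. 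This yields $1/\pi_t^\star \leq 1 + e\bigl(k^\star + (k-k^\star)L\bigr)/(\theta_t^\star + k^\star + k^\star L)$ plus a negligible term, which is small because $\theta_{\max} \geq 0$. The constant $9$ then follows by crude bounding, using $L \geq 1$ and $\theta_t^\star \leq kL$. Intuitively, the extremal configuration places about $\bigl((k-k^\star)L - \theta_t^\star\bigr)/(\theta_{\max} + L)$ suboptimal logits at $\theta_{\max}$ and the rest at $-L$.
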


The main theorem now follows from an elementary Lyapunov argument, using as a potential the function appearing in \cref{lem:good}.

\begin{proof}[Proof of \cref{thm:upper}]
Let $\delta = \frac{1}{k^2 n}$ and $\tau$ be the minimum of $n$ and the first time $t$ that $G_{t+1}$ does not hold.
Note that $\tau$ is a stopping time adapted to $(\cF_t)$ since $\theta_{t+1}$ is $\cF_t$-measurable.
By \cref{lem:tech,lem:Z} and a union bound, $\bbP(\tau < n) \leq \delta[k^\star (k - k^\star) + k] \leq 1/n$.
Let
\begin{align*}
\psi(u) = 9k \log(n/\delta)\log\left(\frac{u/k_\star + 1 + \log(n/\delta)}{1 + \log(n/\delta)}\right) 
\qquad \text{and} \qquad
\psi'(u) = \frac{9 k \log(n/\delta)}{u + k^\star +  k^\star \log(n/\delta)} \,.
\end{align*}
When $u \geq -k^\star$, then the second derivative of $\psi$ can be controlled in terms of the first by
\begin{align*}
\psi''(u) &= -\frac{9k \log(n/\delta)}{(u + k^\star+ k^\star \log(n/\delta))^2} = -\frac{\psi'(u)}{u + k^\star + k^\star \log(n/\delta)} \geq -\psi'(u) \,.
\end{align*}
Let $D_t = \theta_{t+1}^\star - \theta_t^\star = \eta Y_t (\sind_{A_t \leq k^\star} - \pi_t^\star)$ and suppose that $t \leq \tau$.
We now arrive at the main Lyapunov argument, replacing the 
It\^o calculus used by \cite{L26cpg} with a Taylor series expansion:
\begin{align*}
\psi(\theta^\star_{t+1}) - \psi(\theta_t^\star)
\exl{a}\geq \psi'(\theta_t^\star) D_t + \frac{D_t^2}{2} \psi''(\theta_t^\star - \eta) 
\exl{b}\geq \psi'(\theta_t^\star) D_t + 2D_t^2 \psi''(\theta_t^\star) 
\exl{c}\geq \psi'(\theta_t^\star) (D_t - 2D_t^2) \,.
\end{align*}
where \ex{a} follows from Taylor's theorem and because $|D_t| \leq \eta$ and $u \mapsto \psi''(u)$ is increasing, 
\ex{b} follows because $\theta_t^\star - \eta + k^\star + k^\star \log(n/\delta) \geq \frac{1}{2}(\theta_t^\star + k^\star + k^\star \log(n/\delta))$. 
\ex{c} follows because $\theta_t^\star / k^\star \geq -1$ for $t \leq \tau$ by \cref{lem:good}.
But $\E_{t-1}[D_t] = \eta \pi^\star_t R_t$ and $\E_{t-1}[D_t^2] \leq \eta^2 \pi^\star_t(1-\pi^\star_t) \leq \eta \pi^\star_t R_t / 4$ 
where the last inequality follows because $\eta \leq \Dmin/4$ and $R_t \geq (1 - \pi^\star_t) \Dmin$.
Therefore 
\begin{align*}
\E_{t-1}[\psi(\theta^\star_{t+1})] - \psi(\theta_t^\star) \geq \eta \psi'(\theta_t^\star) \pi^\star_t R_t / 2 \exl{a}\geq \eta R_t/2\,,
\end{align*}
where \ex{a} holds by \cref{lem:good} and because $G_t \subset \{t \leq \tau\}$.
Hence, since $0 \leq \Reg_t \leq n$ for all $t \leq n$,
\begin{align*}
\E[\Reg_n] 
&\leq n \bbP(\tau < n) + \E[\Reg_\tau] 
\leq 1 + \frac{2}{\eta} \E\left[\psi(\theta^\star_{\tau+1})\right] = O\left(\frac{k \log(n) \log(k)}{\eta}\right) \,,
\end{align*}
where the final inequality follows because $|D_t| \leq \eta$ and $\theta^\star_\tau \leq k \log(n/\delta)$ by \cref{lem:good} so that 
$\theta^\star_{\tau+1} \leq \theta^\star_\tau + \eta \leq k \log(n/\delta) + \eta$.
\end{proof}

\section{Proof of Lemma~\ref{lem:tech}}\label{sec:lem:tech}
\ref{lem:tech:conserve} is immediate from the definitions in \cref{alg:pg}. 
\ref{lem:tech:bounds} follows from a martingale argument. Let $D_t = \theta_{t+1,a} - \theta_{t,a} = \eta Y_t (\sind_{A_t = a} - \pi_{t,a})$.
Then
\begin{align}
\E_{t-1}[\exp(-\theta_{t+1,a})]
&= \exp(-\theta_{t,a}) \E_{t-1}[\exp(-D_t)] 
\exl{a}\leq \exp(-\theta_{t,a}) \E_{t-1}[1 - D_t + D_t^2] \nonumber \\
&\exl{b}\leq \exp(-\theta_{t,a}) \left(1 + \pi_{t,a}(\eta+\eta^2) \right) 
\exl{c}\leq \exp(-\theta_{t,a}) + \eta + \eta^2\,, \label{eq:mtg}
\end{align}
where \ex{a} follows since $\exp(x) \leq 1 + x + x^2$ for $|x| \leq 1$ and $|D_t| \leq \eta \leq 1$.
\ex{b} follows since $Y_t \in [0,1]$ so that $\E_{t-1}[D_t] \geq -\eta \pi_{t,a}$ and $\E_{t-1}[D_t^2] \leq \eta^2 \pi_{t,a}(1 - \pi_{t,a}) \leq \eta^2 \pi_{t,a}$. 
\ex{c} holds since $\max_{b \in [k]} \theta_{t,b} \geq 0$ by \ref{lem:tech:conserve}, which implies that $\pi_{t,a} = \exp(\theta_{t,a}) / \sum_{b \in [k]} \exp(\theta_{t,b}) \leq \exp(\theta_{t,a})$.
By \cref{eq:mtg}, $M_t = \exp(-\theta_{t,a}) + (n-t) (\eta + \eta^2)$ is a non-negative supermartingale for $1 \leq t \leq n$ and by the maximal inequality
$\bbP(\sup_{1 \leq t \leq n} M_t \geq \E[M_1] / \delta) \leq \delta$. The result follows since $\E[M_1] = 1 + (n-1) (\eta + \eta^2) \leq n$.

\section{Proof of Lemma~\ref{lem:Z}}\label{sec:lem:Z}
Let $a \in [k^\star]^c$ and $b \in [k^\star]$ and $\cI = [-\Dmin/(2\Dmax), 1]$.
We will show the following:
\begin{enumerate}
\item Suppose that $Z_s \in [0, 1]$ for some (random) $s$ and $\tau = \min\{t \geq s : Z_{t+1} \notin \cI\}$, then \label{item:Z1}
\begin{align*}
\bbP_{s-1}(\tau \leq n \text{ and } Z_{\tau+1} \leq -\Dmin/(2\Dmax)) \leq \delta / n\,.
\end{align*}
\item $Z_{t+1} \geq Z_t - 1$ almost surely for all $t$. \label{item:Z2}
\end{enumerate}
The claim follows from these two results by a union bound over the (random) time intervals where $Z_t$ enters or leaves $\cI$.
By definition $Z_1 = 0$ and \ref{item:Z1} shows that with high probability the process either stays in $\cI$ or eventually exceeds $1$.
\ref{item:Z2} shows that if the process is above $1$, then it cannot jump below $0$. Hence, by induction and a union bound it follows that with probability at least $1 - \delta$ the process
never drops below $-\Dmin / (2 \Dmax) \geq -1$ as required.
We now establish \ref{item:Z1} and \ref{item:Z2}.
Let $D_t = Z_{t+1} - Z_t$. By definition $D_t = \eta Y_t \left[\sind_{A_t=b} - \pi_{t,b} - \sind_{A_t=a} + \pi_{t,a}\right]$.
Therefore $|D_t| \leq 2\eta \leq 1$, which establishes \ref{item:Z2}.
Suppose that $Z_t \in \cI$, then
\begin{align}
\E_{t-1}[D_t] 
= \eta \left[(\pi_{t,b} - \pi_{t,a}) R_t + \pi_{t,a} \Delta_a\right] 
= \eta \left[\pi_{t,a} (\exp(Z_t) - 1) R_t + \pi_{t,a} \Delta_a\right]
\exl{a}\geq \frac{\eta \pi_{t,a} \Delta_a}{2}\,,
\label{eq:D}
\end{align}
where \ex{a} holds since $(\exp(Z_t) - 1)R_t \geq Z_t R_t \geq -\frac{\Dmin R_t}{2 \Dmax} \geq -\frac{\Dmin}{2}$ and $\Delta_a \geq \Dmin$.
Since $Y_t \in [0,1]$,
\begin{align}
\E_{t-1}[D_t^2] 
&\leq \eta^2 \E_{t-1}\left[\left(\sind_{A_t=b} - \sind_{A_t = a} + \pi_{t,a} - \pi_{t,b} \right)^2\right] \nonumber \\
&= \eta^2 \left[\pi_{t,b} + \pi_{t,a} - (\pi_{t,b} - \pi_{t,a})^2\right]
\leq \eta^2 (\pi_{t,b} + \pi_{t,a}) \exl{a}\leq 4 \eta^2 \pi_{t,a} \,, 
\label{eq:D2}
\end{align}
with \ex{a} is true because $Z_t \in \cI$ so that $\pi_{t,b} = \exp(Z_t) \pi_{t,a} \leq e \pi_{t,a}$.
Suppose that $s$ is such that $Z_s \in [0, 1]$ and $\tau = \min\{t \geq s : Z_{t+1} \notin \cI\}$.
By a version of Freedman's inequality \citep[Theorem 9]{zimmert22b},
\begin{align}
\bbP_{s-1}\left(\sum_{t=s}^\tau D_t \leq \sum_{t=s}^\tau \E_{t-1}[D_t] - 3 \sqrt{\sum_{t=s}^\tau \E_{t-1}[D_t^2] \log\left(\frac{n^2}{\delta}\right)} - 2 \eta \log\left(\frac{n^2}{\delta}\right)\right) \leq \delta/n \,.
\label{eq:P}
\end{align}
Hence, with $\bbP_{s-1}$-probability at least $1 - \delta / n$, 
\begin{align*}
Z_{\tau+1} = Z_s + \sum_{t=s}^\tau D_t 
&\exl{a}\geq \sum_{t=s}^\tau \E_{t-1}[D_t] - 3 \sqrt{\sum_{s=1}^\tau \E_{t-1}[D_t^2] \log\left(\frac{n^2}{\delta}\right)} - 2\eta \log\left(\frac{n^2}{\delta}\right) \\
&\exl{b}\geq \frac{\eta \Dmin}{2} \sum_{t=s}^\tau \pi_{t,a} - 6\eta \sqrt{\sum_{t=s}^\tau \pi_{t,a} \log \left(\frac{n^2}{\delta}\right)} - 2\eta \log\left(\frac{n^2}{\delta}\right) \\
&\exl{c}\geq -\frac{20\eta \log\left(\frac{n^2}{\delta}\right)}{\Dmin} 
\exl{d}> -\frac{\Dmin}{2 \Dmax}\,,
\end{align*}
where 
\ex{a} follows from \cref{eq:P} and because $Z_s \geq 0$.
\ex{b} follows from \cref{eq:D,eq:D2}. 
\ex{c} holds because $ax^2 - bx \geq - \frac{b^2}{4a}$ for $a, b > 0$ and 
\ex{d} follows from the definition of $\eta$.
This completes the proof of \ref{item:Z1} and \ref{item:Z2} and therefore also of \cref{lem:Z}.

\section{Proof of Lemma~\ref{lem:good}}\label{sec:lem:good}
We generalise the proof by \cite{L26cpg}.
Suppose that $G_t$ holds.
To reduce clutter we drop the $t$ index so that $\theta = \theta_t$ and $\pi = \pi_t$.
Let $\theta_{\min} = -\log(n/\delta)$ and $\theta_{\max} = \theta^\star / k^\star + 1$.
By assumption, if $b \in [k^\star]$ and $a \in [k^\star]^c$, then $\theta_b \geq \theta_a - 1$.
Hence, by \cref{lem:tech}\ref{lem:tech:conserve},
\begin{align*}
\frac{1}{k^\star} \sum_{b \in [k^\star]} \theta_b \geq \frac{1}{k - k^\star} \sum_{a \in [k^\star]^c} \theta_a - 1
= -\frac{1}{k-k^\star} \sum_{b \in [k^\star]} \theta_b - 1\,,
\end{align*}
which after rearranging implies that $\theta^\star \geq -k^\star$. 
Note this also implies that $\theta_{\max} \geq 0$.
That $\theta^\star \leq k \log(n/\delta)$ is immediate from \cref{lem:tech}\ref{lem:tech:conserve} and the fact that $\min_{c \in [k]} \theta_c \geq -\log(n/\delta)$.
This completes the proof that $\theta^\star \in [-k^\star, k \log(n/\delta)]$.
For the second part,
\begin{align*}
\frac{1}{\pi^\star} 
&= 1 + \frac{\sum_{a \in [k^\star]^c} \exp(\theta_a)}{\sum_{a \in [k^\star]} \exp(\theta_a)} 
\exl{a}\leq 1 + \frac{\exp(1-\theta_{\max})}{k^\star} \sum_{a=[k^\star]^c} \exp(\theta_a) \\
&\exl{b}\leq 1 + \frac{\exp(1 - \theta_{\max})}{k^\star} \sum_{a=[k^\star]^c} \left[\frac{\theta_a - \theta_{\min}}{\theta_{\max} - \theta_{\min}} \exp(\theta_{\max}) + \frac{\theta_{\max} - \theta_a}{\theta_{\max} - \theta_{\min}} \exp(\theta_{\min})\right] \\
&\exl{c}\leq 1 + \frac{e/k^\star}{\theta_{\max} - \theta_{\min}} \sum_{a=[k^\star]^c} \left[\theta_a - \theta_{\min} + \frac{\delta}{n} (\theta_{\max} - \theta_a)\right] \\
&\exl{d}\leq 1 + \frac{e/k^\star}{\theta_{\max} - \theta_{\min}} \left[k_\star + 2(k-k^\star) \log(n/\delta)\right] \\
&\exl{e}\leq 1 + \frac{6 k / k^\star \log(n/\delta))}{\theta^\star / k^\star + 1 + \log(n/\delta)} 
\exl{f}\leq \frac{9k / k^\star \log(n/\delta)}{\theta^\star / k^\star + 1 + \log(n/\delta)} \,.
\end{align*}
where
\ex{a} and \ex{b}
follow from convexity of $x \mapsto \exp(x)$ and because $\theta_a \in [\theta_{\min}, \theta_{\max}]$ for $a \in [k^\star]^c$.
\ex{c} since $\theta_{\min} = -\log(n/\delta)$ and $\theta_{\max} \geq 0$.
\ex{d} because $\sum_{a \in [k^\star]^c} \theta_a = -\theta^\star \leq k^\star$ and $\theta_{\max} \leq \theta^\star + 1 \leq k \log(n/\delta) + 1$ and $n \geq k$.
\ex{e} and \ex{f} by crude bounding of constants.

\section{Related work and discussion}\label{sec:disc}

\cite{mei2020global} prove logarithmic regret in the deterministic setting (\cite{walton2020short} did the same in continuous time).
In the stochastic setting \cite{mei2023stochastic} claimed logarithmic regret when the learning rate is $O(\Dmin^2 / k^{3/2})$. \cite{baudry2025does}
identified a small issue with their argument, which after correction probably leads to polylogarithmic regret. 
Remarkably, \cite{mei2024small} proved that when the optimal arm is unique, then $\pi_t$ converges almost surely to a Dirac on the optimal action for \textit{any} learning rate.
Policy gradient is often studied with a time-varying learning rate (and for MDPs) with sample complexity as the usual metric \citep{yuan2022general,LARV24}.
These results are hard to compare directly.

\paragraph{Logarithmic factors}
The main question is whether or not the logarithmic factor in the denominator of $\eta$ in the condition in \cref{thm:upper} can be relaxed.
One interesting data point is by \cite{baudry2025does}, who show that if $\Dmin = \Dmax$, then policy gradient enjoys $O(k \log(n)/\eta)$ regret when $\eta = O(\Dmin / k)$. 
This bound is worse than ours when $k$ is large relative to $\log(n)$ but better as $n \to \infty$. Most importantly, it shows that at least in this special
case there is no need for the logarithmic factor in the denominator.
On the other hand, the lower bound by \cite{L26cpg} (in continuous time) shows that if $\eta = C \Dmin^2$ for suitably large $C$, then on the instance with Gaussian noise and 
$\mu = (1, 1 - \Dmin, 0,\ldots,0)$, the regret is $\Omega(n \Dmin)$ even for large $n$.

\paragraph{Improved asymptotics}
A potential way to improve the algorithm is to use a learning rate that \textit{increases} with time.
The miniscule learning rate was needed in the proof of \cref{lem:Z}, but in an asymptotic sense the argument there is weak in two places:
\texttt{(1)} We naively bounded $R_t \leq \Dmax$; and \texttt{(2)} in the last display we used the bound $Z_s \geq 0$. But in reality, with high probability this bound is loose except
when $s = 1$. What actually happens is $(Z_s)$ slowly grows and once sufficient growth has occurred, the learning rate can be increased dramatically without risking that it becomes negative. 
The caveat is that this growth is not observed (because the set of optimal actions is unknown). Hence the identifying the correct learning rate depends on understanding the expected sample path
of $(\theta_t)$.
Besides \cref{lem:Z}, we otherwise only used that $\eta \leq \Dmin/4$ in the proof of \cref{thm:upper}.

\begin{remark}
A specialisation of \citet[Theorem 3]{baudry2025does} shows that if $\eta > \frac{3 \log(3)}{k-1}$, then there exists an instance with $\Dmin \geq 1/2$ where the regret is $\Omega(n^{2/3})$.
When $k$ is large, this seems to contradict our bound, which on these instances shows that $\eta = \tilde O(1)$ is sufficient for polylogarithmic regret.
The resolution is that the lower bound of \cite{baudry2025does} is asymptotic only and our proposed learning rate depends on the horizon.
\end{remark}

\paragraph{Lyapunov function}
The Lyapunov function used in the proof of \cref{thm:upper} was derived only \textit{after} solving the problem in a more brutal fashion.
Note that without this exotic choice one has the following:
\begin{align*}
k \log(n) \exl{a}\gtrsim \E[\theta_{n+1}^\star] = \E\left[\sum_{t=1}^n \eta \pi_t^\star R_t\right] \exl{b}\gtrsim \frac{\eta}{k} \E[\Reg_n] \,, 
\end{align*}
where \ex{a} follows from \cref{lem:tech}\ref{lem:tech:bounds} and \ex{b}
since $\min_{b \in [k^\star]} \min_{a \in [k^\star]^c} (\theta_{t,b} - \theta_{t,a}) \geq -1$ with high probability and this implies that $\pi_t^\star \gtrsim 1/k$ with high probability.
Rearranging shows that $\E[\Reg_n] \lesssim k^2 \log(n) / \eta$ under the same assumptions on $\eta$ as \cref{thm:upper}.
The intuition behind the improvement is based on \cref{lem:good}, which shows that $1/\pi_t^\star$ decreases as $\theta_t^\star$ increases so that actually $1/\pi_t^\star \lesssim 1$ once
$\theta_t^\star \gtrsim k \log(n)$.

\bibliographystyle{abbrvnat}
\bibliography{all}

\end{document}